 \newtheorem{problem}{Problem}
 \newtheorem{theorem}{Theorem}
 \newtheorem{lemma}{Lemma}
 \newtheorem{corollary}{Corollary}
\DeclareMathOperator*{\argmax}{arg\,max}
\title{\LARGE \bf
Risk-Averse Multi-Armed Bandits with Unobserved Confounders: A Case Study in Emotion Regulation in Mobile Health
}
\author{Yi Shen, Jessilyn Dunn and Michael M. Zavlanos
\thanks{*This work is supported in part by AFOSR under award \#FA9550-19-1-0169
and by NSF under award CNS-1932011.}
\thanks{Yi Shen and Michael M. Zavlanos are with the Department of Mechanical Engineering and Materials Science, Duke University, Durham, NC, USA. Email: \{yi.shen478, michael.zavlanos\}@duke.edu}%
\thanks{Jessilyn Dunn is with the Department of Biomedical Engineering, Duke University, Durham, NC, USA. Email: jessilyn.dunn@duke.edu}
}
\begin{document}

\maketitle
\thispagestyle{empty}
\pagestyle{empty}

\begin{abstract}
In this paper, we consider a risk-averse multi-armed bandit (MAB) problem where the goal is to learn a policy that minimizes the risk of low expected return, as opposed to maximizing the expected return itself, which is the objective in the usual approach to risk-neutral MAB. Specifically, we formulate this problem as a transfer learning problem between an expert and a learner agent in the presence of contexts that are only observable by the expert but not by the learner. Thus, such contexts are unobserved confounders (UCs) from the learner's perspective. 
Given a dataset generated by the expert that excludes the UCs, the goal for the learner is to identify the true minimum-risk arm with fewer online learning steps, while avoiding possible biased decisions due to the presence of UCs in the expert's data.
To achieve this, we first formulate a mixed-integer linear program that uses the expert data to obtain causal bounds on the Conditional Value at Risk (CVaR) of the true return for all possible UCs. We then transfer these causal bounds to the learner by formulating a causal bound constrained Upper Confidence Bound (UCB) algorithm to reduce the variance of online exploration and, as a result, identify the true minimum-risk arm faster, with fewer new samples. We provide a regret analysis of our proposed method and show that it can achieve zero or constant regret. Finally, we use an emotion regulation in mobile health example to show that our proposed method outperforms risk-averse MAB methods without causal bounds.

\end{abstract}

\section{Introduction}
\label{sec:0_intro}
Multi-armed bandit (MAB) problems are sequential decision making problems where an agent sequentially selects arms to pull and receives a random reward in order to learn the reward distributions of all arms and at the same time to find a strategy that maximizes the total expected reward. Many applications, ranging from treatment design \cite{murphy2003optimal} and news article recommendation \cite{li2010contextual} to online marketing \cite{misra2019dynamic}, can be formulated as MAB problems. However, risk-neutral formulations that only maximize the total expected reward do not always provide desirable solutions. For example, in mobile health-based interventions for emotion regulation (ER) \cite{ameko2020offline}, the strategy with the highest average effectiveness rate (i.e., generating the highest possible positive emotions) is not necessarily the best; minimal adverse reactions (e.g., behaviors that cause harm to self or others) are also necessary. To avoid rare but catastrophic outcomes, appropriate risk-averse criteria can be considered during learning, e.g., Conditional Value at Risk (CVaR) \cite{artzner1999coherent}. 

Oftentimes, in risk-sensitive applications, such as in the mobile health interventions for ER discussed above, the data collected by experts are available but miss important confounding variables that influence both the dependent and independent variables, causing spurious association effects. This is, e.g., the case when mobile health devices can only collect partial data due to technological limitations and/or privacy concerns. In this situation, when the expert observational data contain contexts that are unobserved confounders (UCs) to the learner, it is well known that the average treatment effects cannot be estimated without bias, regardless of the sample size \cite{pearl2009causality}. Instead, what can be computed using ideas from causal inference are causal bounds on the true treatment effects that include all possible UC realizations, as shown in the seminal work by \cite{balke1997bounds}.

Causal bounds computed from expert observational data have been recently used to develop transfer learning methods for bandit problems \cite{zhang2017transfer}, reinforcement learning problems \cite{zhang2020transfer} and imitation learning problems \cite{liu2021learning}. Specifically, \cite{zhang2017transfer} propose a transfer learning method that uses causal bounds computed from expert data that contain UCs to obtain a causal bound constrained Upper Confidence Bound (UCB) algorithm that the learner can use to learn an optimal policy with few new data samples. More recently, \cite{zhang2020transfer} extend this framework to reinforcement learning problems by computing causal bounds on value functions and using these bounds to develop a causal bound constrained Q-learning algorithm for the learner. Motivated by these approaches, \cite{liu2021learning} consider an imitation learning problem, where the expert's policies can be modeled as different arms in a MAB and the learner's goal is to learn the best arm, i.e., policy, and improve it online using only a few new data samples. Common in the above methods is that they have been designed for and are only applicable to risk-neutral problems. 

Risk-sensitive bandit problems are studied in \cite{sani2013risk,vakili2016risk,cassel2018general,tamkin2019distributionally}. Specifically, \cite{sani2013risk,vakili2016risk} extend classic risk-neutral MAB to risk-averse MAB using a mean-variance risk measure. Since in risk-averse MAB the optimal policy is not necessarily a single-arm policy as in risk-neutral MAB, the authors bound the performance gap between the optimal policy and the optimal single arm policy and incorporate this gap into the regret analysis. On the other hand, \cite{cassel2018general} provide a systematic approach for regret analysis in MAB under different risk criteria such as value-at-risk, Conditional Value at Risk (CVaR), and Sharpe-ratio. Finally, \cite{tamkin2019distributionally} present a distributionally-aware method that adds an exploration term to the estimated cumulative distribution function (CDF) of the rewards and achieves better empirical results  compared to \cite{cassel2018general} under the CVaR risk measure. Common in all these methods is that they merely focus on online learning problems and do not rely on any existing rich observational data. 

In this work, we propose a new transfer learning method for risk-averse MAB that can handle UCs in the expert data. Specifically, we consider an expert and a learner agent, both modeled as contextual MAB \cite{langford2007epoch,slivkins2011contextual}, and assume that the expert is presented with additional contextual information (e.g., a person's past activities and locations) that can affect both the expert's policy, or the selection of the arms (e.g., the next ER intervention to implement), and the reward function. This contextual information is not recorded in the expert dataset that is provided to the learner (e.g., a person's mobile health device) and, therefore, it constitutes an UC for the learner. 
The goal of the learner is to identify the optimal arm with fewer online interactions with the environment. To do so, we first formulate a mixed-integer linear program (MIP) that utilizes the observational data to calculate causal bounds on the CVaR values of the true reward function. We then transfer these causal bounds to the learner and propose a causal bound constrained UCB algorithm to avoid risky online exploration and learn the optimal arm without accruing bias from the observational data. We provide a regret analysis that shows that it is possible to achieve zero or constant regret using causal bounds. Finally, we illustrate our proposed method on the mobile health example,  which aims to optimize an intervention for ER to increase positive emotions and decrease negative emotions, while avoiding high-risk behaviors that could cause harm to self or others.

To the best of our knowledge, transfer learning methods for risk-averse MAB have not been studied in the literature. Perhaps the most closely related works to the proposed method are \cite{zhang2017transfer,tamkin2019distributionally}. Compared to \cite{zhang2017transfer}, the calculation of causal bounds proposed here can handle additional assumptions on UCs and, as a result, can return tighter causal bounds. Moreover, the analysis in \cite{zhang2017transfer} is tailored to risk-neutral bandits and cannot be directly adapted to risk-averse problems. Compared to \cite{tamkin2019distributionally}, here we use the same risk measure but also leverage rich observational data that are available in many risk-averse applications. As a result, we can obtain better online performance even in the presence of UCs.     

The rest of the paper is organized as follows. In section \ref{sec:1_prelim}, we discuss the models of the expert and the learner and define the transfer learning problem. In section \ref{sec:2_method}, we formulate the optimization problem to compute the causal bounds on CVaR values using observational data and present the proposed causal bound constrained UCB algorithm. In Section \ref{sec:3_regret}, we present regret analysis results and show that causal bounds can achieve lower regret under certain conditions. In Section \ref{sec:4_results}, we present numerical results on a mobile health-based ER intervention application to illustrate a real-world example application, as well as the effectiveness of the proposed method. Finally, we conclude this work in Section \ref{sec:5_conc}.

\section{Problem Definition}
\label{sec:1_prelim}
Consider a contextual MAB problem defined by the tuple $\left(C,X,Y^C(X)\right)$, where $C$ is a random variable that models the context, $X\in\{1,\dots,K\}$ is a random variable that indicates the selection of one of $K$ arms, and $Y^c(x)$ is a random reward function associated with arm $X=x$ given context $C=c$. At each time step, a sample context $c$ is drawn independently from a distribution $P(C)$ and is announced to the agent. Then, an agent chooses one of the arms and the reward associated with this arm is revealed. 
%
Without loss of generality, we assume the rewards are non-negative and upper bounded by $U\in\mathbb{R}$. 
For example, in ER mobile health applications, $c$ can represent the user's demographic information and/or previous activities and $X$ can be the set of all possible treatments to relief anxiety. Then, the psychological clinician that is the expert agent can prescribe one of the possible treatments and observe the outcome $y^c(x)$, i.e., the effectiveness of emotion regulation after the treatment. The goal is to find a context-dependent policy that achieves the best outcome.   

Given a contextual MAB problem, we can define a standard MAB problem induced by it as $\left(X,\mathbb{E}_{C}[Y^C(X)]\right)$, where $X$ is a random variable indicating the selection of an arm as in contextual MAB and $\mathbb{E}_{C}[Y^C(X)]$ is the expected random reward function with respect to the distribution $P(C)$.
We then model the learner's decision process as a standard MAB since the context is not observable to the learner. For example, in the same mobile health application discussed above, due to privacy concerns, the clinician that is the learner agent may not have access to obtain the patients' demographic information; owing to sensor limitations, cheap wearable devices cannot measure the same number of contexts as more expensive ones (the expert agent). In this case, the context $C$ is a random variable that is unobserved by the learner, yet it affects the outcome of the possible treatments. The goal of the learner is to find a context-independent policy that achieves the best outcome. Specifically, we are interested in the case where the learner's performance is evaluated by a risk-averse measure.

In this paper, we use the CVaR as the risk measure for the learner as in \cite{tamkin2019distributionally}. CVaR measures the expected value of a distribution's tail. Formally, let $X$ be a bounded random variable with CDF $F_X(x)=P(X\leq x).$ The CVaR at level  $\alpha\in(0,1)$ of the random variable $X$ is defined as \cite{rockafellar2000optimization}:
\begin{align*}
    \text{CVaR}_{\alpha}(X)=\sup_{\nu}\{\nu -\frac{1}{\alpha}\mathbb{E}[(\nu-X)^+]\},
\end{align*}
where $[x]^+=\max\{x,0\}$. We sometimes write $\text{CVaR}_{\alpha}(F_X)$ for $\text{CVaR}_{\alpha}(X)$, where $F_X$ is the CDF of the random variable $X$.
Given the conditional value at risk level $\alpha\in(0,1]$, we define the CVaR regret associated with a MAB at time $n$, the same as in \cite{tamkin2019distributionally}, as
\begin{align}\label{eq:cvarregret}
    R_n^{\alpha} = n \max_{x\in [K]}\left(\text{CVaR}_{\alpha}(F_{x})\right) - \mathbb{E}\left[\sum_{t=1}^n\text{CVaR}_{\alpha}(F_{X_t})\right],
\end{align}
where $X_t$ is the action taken at time step $t$, $[K] =\{1,\ldots,K\}$ and $F_x$ is the CDF of the distribution of rewards of the arm $x$. The goal of the learner is to find the arm $x$ that minimizes the CVaR regret.

Assume now a dataset $\tau_E=\{(x_t,y_t)\}_{t=1}^N$ generated by the expert, where $x_t,y_t$ are the action taken and reward received at time step $t$. Note that the expert's actions depend on the contextual information $c$, but this information is not recorded in $\tau_E$. Then, in this paper, we address to solve the following problem.

\begin{problem}\label{problem1}
Given the observational data $\tau_E$ generated by an expert (modeled by a contextual MAB), design a transfer learning algorithm for the learner (modeled by the induced MAB) that leverages the data $\tau_E$ to find an optimal arm selection strategy that minimizes the CVaR regret defined in \eqref{eq:cvarregret}. 
\end{problem}

Since action-reward pairs are given in the observation data, one might attempt to first calculate the $\text{CVaR}_{\alpha}$ value for each arm and transfer the arm with the highest $\text{CVaR}_{\alpha}$ value to the learner. It turns out that with UCs this approach can return an arm that is sub-optimal or even the worst. See Example 1 in \cite{liu2021learning} for a case where this naive transfer does not work for risk-neural bandits, that are a special case of CVaR bandits for $\alpha=1$.  

\section{Transfer Learning with Unobserved Confounders}\label{sec:2_method}
In this section, we propose a TL framework to solve Problem \ref{problem1}. We first formulate an optimization problem that calculates causal bounds on CVaR. We then transfer the causal bounds to the learner and propose a causal bound constrained risk-averse MAB algorithm.    
\subsection{Causal Bound Optimization}
\begin{figure}
	\centering
	\subfigure[\footnotesize Expert]{\label{subfig:demostrator}\includegraphics[scale=0.06]{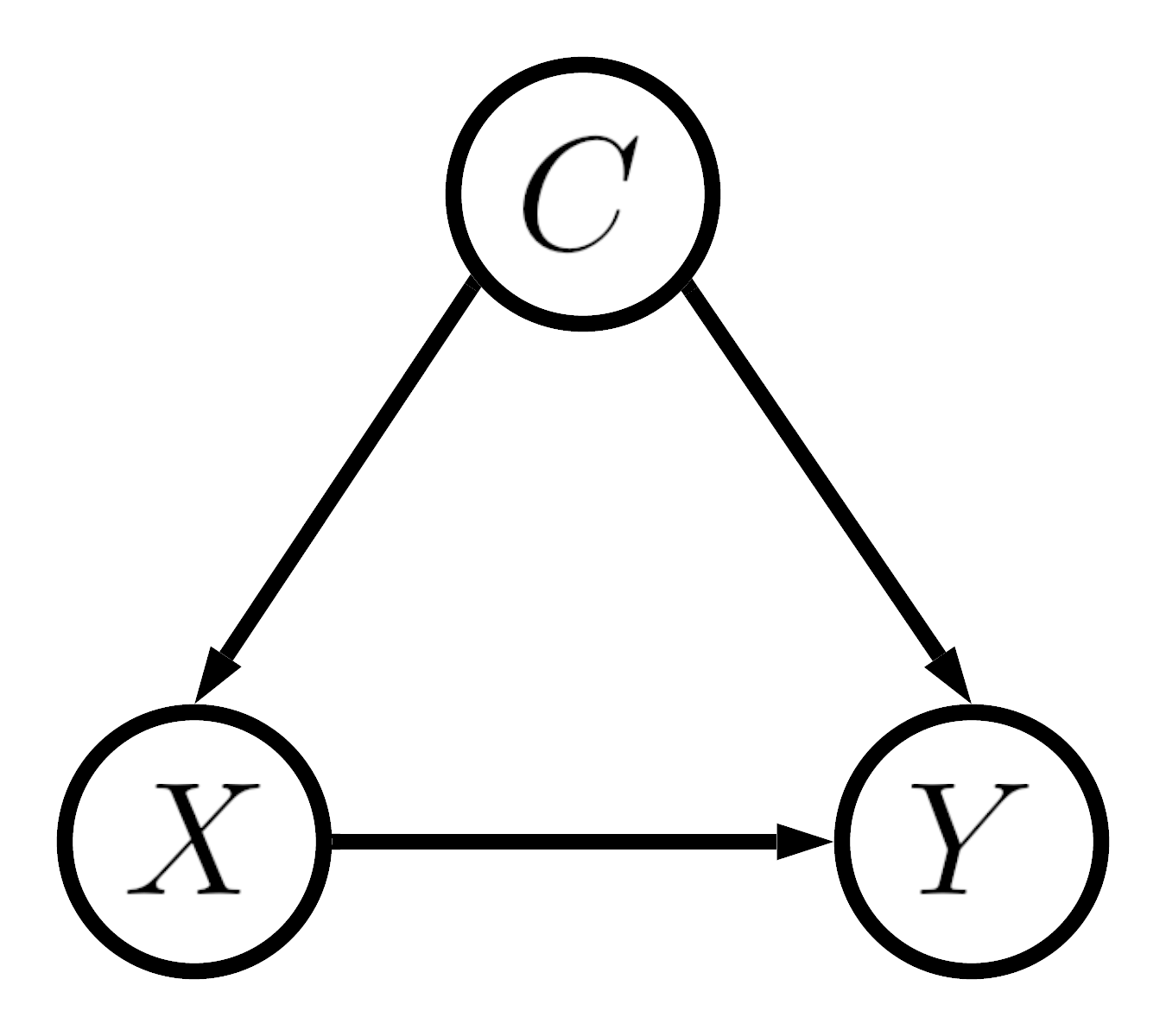}}\qquad \qquad
	\subfigure[\footnotesize Learner]{\label{subfig:learner}\includegraphics[scale=0.06]{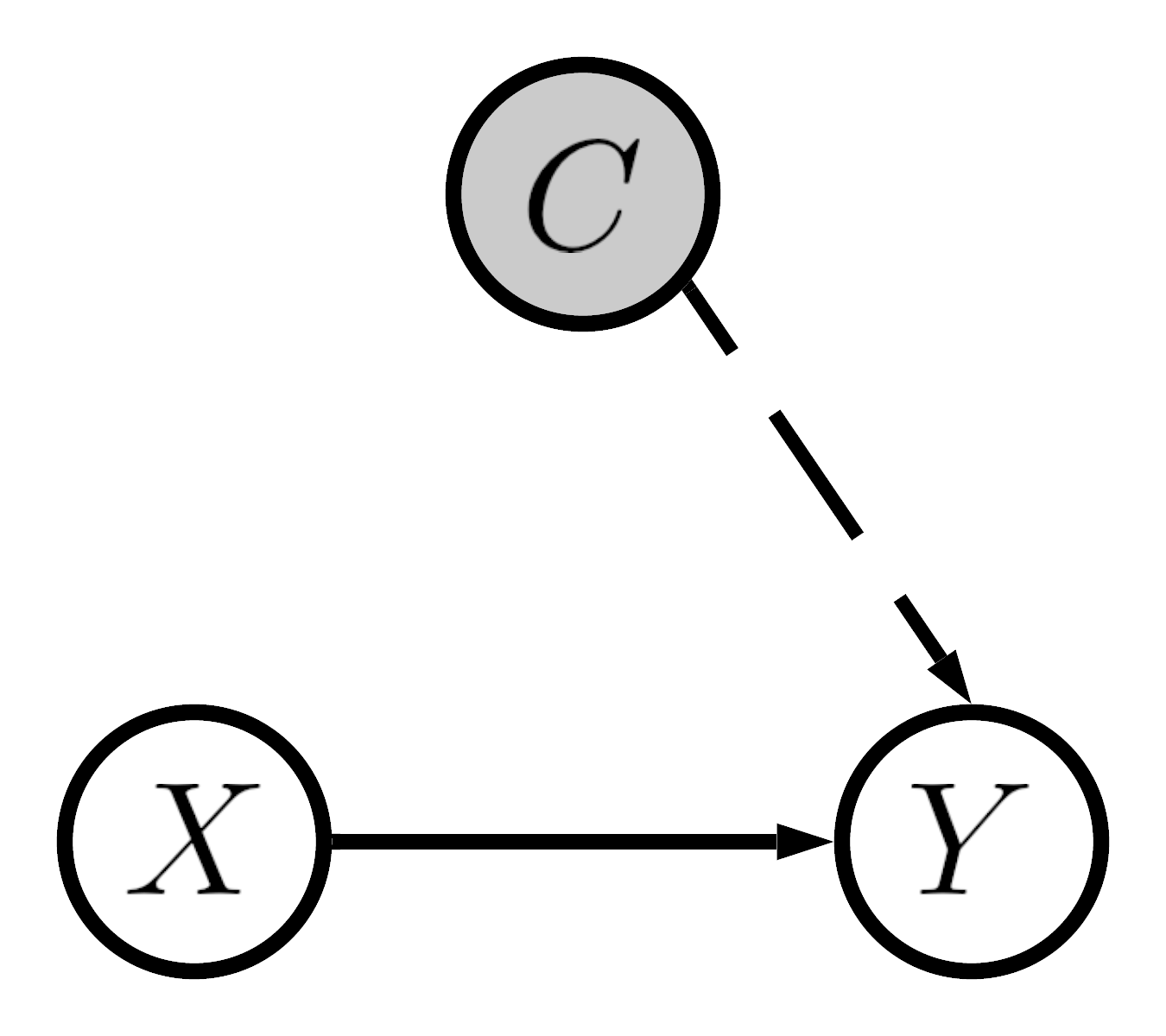}}
	\caption{\small Expert and learner causal graphs. (\textit{a}) the causal graph of expert; (\textit{b}) the causal graph of the learner. The gray nodes represent the unobserved confounder $C$ in the learner's model, and the white nodes represent the observable data. The arrows indicate the directions of the causal effects.}
	\label{figure:causal graph}
	\vspace{-4mm}
\end{figure}

The MAB models introduced in Section \ref{sec:1_prelim} can be seen through a causal lens as depicted in Figure \ref{figure:causal graph}. Specifically, the learner's model can be thought of as a special case of the expert's model equipped with the $do(\cdot)$ operation. The intervention (action) $do(X=x)$ represents a model manipulation where the value of $X$ is set to $x$ regardless of how it is originally determined in the model, as discussed in \cite{pearl2009causality} Chapter 3. Defining the action in the expert's model by the $do(X=x)$ operator recovers the learner's model as the action $x$ is selected regardless of contextual information. Note that the reward functions still depend on the contexts. With the above definitions, Problem \ref{problem1} reduces to finding values of $P(y|do(X=x))$ for all action and reward pairs given observational data $\tau_E=\{(x_t,y_t)\}_{t=1}^N$ that do not contain contextual information. In what follows, we will denote $do(X=x)$ by $do(x)$. We assume that the contexts $C$, the actions $X$, and the rewards $Y$ take discrete values. 

It is well-known that in general $P(y|do(x)) \neq P(y|x)$ if UCs exist \cite{pearl2009causality}. \cite{tian2000probabilities} provide a simple formula for bounding $P(y|do(x))$ in the presence of UCs, that is 
\begin{align}\label{eq:simplebound}
    P(x,y) \leq P(y|do(x)) <1- P(x,y'),
\end{align}
where $y'$ represents a set of the values that are not equal to $y$. Note that the probabilities $P(x,y)$ for any action reward pairs can be estimated from the observation data even though $P(y|do(x))$ cannot.
%
%
However, the obtained causal bounds according to \eqref{eq:simplebound} are usually loose since $1- P(x,y')-P(x,y)$ is often close to 1. This limitation can be overcome if additional information is used to develop these bounds that may be available in practice. 
For example, even though the person's exact current activity such as walking or sitting as a contextual information for ER design may be unknown, the general distribution of possible activities may be known.
This information may allow to estimate the UCs' distribution $P(C)$. As suggested in \cite{li2021bounds}, given the distribution of UCs $P(C)$, we can find tighter causal bounds compared to those provided by (2) by formulating an appropriate optimization problem. Note that knowing $P(C)$ does not imply knowing the relationship between the contexts, actions and rewards, i.e., knowing the distributions of the lab results of all patients does not imply a single patient's treatment outcome. Using the back-door criterion as in \cite{pearl2009causality}, similar to in \cite{li2021bounds}, we can express $P(y|do(x))$ as
\begin{align}\label{eq:complexbound}
    P(y|do(x))=\sum_c \frac{P(x,y,c)P(c)}{P(x,c)}.
\end{align}
Note that \eqref{eq:complexbound} is not computable since $P(x,y,c)$ and $P(x,c)$ cannot be estimated from the observational data. Nevertheless, we can reformulate \eqref{eq:complexbound} as an optimization problem, as shown in the following result.
\begin{theorem}\label{thm:1}
Consider the causal diagram G in Figure \ref{figure:causal graph} (a). Given the probabilities $P(X,Y)$ and $P(C)$, the causal effects $P(Y|do(X))$ of $X$ on $Y$ are bounded as:
\begin{align}\label{eq:causalboundsop}
    LB(y,x) \leq P(y|do(x)) \leq UB(y,x) \; \forall x\in [K], y\in Y,   
\end{align}
where $LB(y,x)$ and $UB(y,x)$ are the solutions to the optimization problem in \eqref{eq:optimization}.
\begin{align}
    LB(UB)&(y,x)  = \min_{a_c,b_c}(\max_{a_c,b_c}) \quad  \sum_c \frac{a_cP(c)}{b_c} \label{eq:optimization}\\
\text{s.t.}  \quad  & P(c)\geq b_c, \text{ }b_c\geq a_c, \nonumber\\ 
& a_c \leq P(x,y), \text{ } b_c \leq P(x), \nonumber \\
& a_c \geq P(x,y)+P(c)-1, \text{ }b_c \geq P(x)+P(c)-1, \nonumber\\
& a_c, b_c \geq 0, \text{ for all $c\in C$}; \nonumber\\
& \sum_c a_c = P(x,y), \text{ } \sum_c b_c = P(x) \nonumber.
\end{align}
\end{theorem}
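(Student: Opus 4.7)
The plan is to show that the true (but unknown) joint probabilities $P(x,y,c)$ and $P(x,c)$ automatically constitute a feasible point of the optimization in \eqref{eq:optimization}, so that the interval $[LB(y,x),\, UB(y,x)]$ necessarily brackets $P(y|do(x))$. To do this, I would set $a_c^{\star} := P(x,y,c)$ and $b_c^{\star} := P(x,c)$; then the back-door decomposition \eqref{eq:complexbound} immediately gives
\begin{equation*}
P(y|do(x)) = \sum_c \frac{a_c^{\star}\, P(c)}{b_c^{\star}},
\end{equation*}
which is exactly the objective of \eqref{eq:optimization} evaluated at $(a_c^{\star},b_c^{\star})$.

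Next, I would verify each constraint at $(a_c^{\star},b_c^{\star})$. Non-negativity and the marginalization equalities $\sum_c a_c^{\star} = P(x,y)$ and $\sum_c b_c^{\star} = P(x)$ follow directly from the definitions. The inequalities $a_c^{\star} \leq P(x,y)$, $a_c^{\star} \leq b_c^{\star}$, $b_c^{\star} \leq P(x)$, and $b_c^{\star} \leq P(c)$ all come from monotonicity of probability applied to event containments such as $\{X=x,Y=y,C=c\} \subseteq \{X=x,C=c\}$. The remaining lower bounds $a_c^{\star} \geq P(x,y)+P(c)-1$ and $b_c^{\star} \geq P(x)+P(c)-1$ are instances of the Fr\'echet--Hoeffding inequality $P(A\cap B) \geq P(A)+P(B)-1$, applied with $B=\{C=c\}$ and $A$ equal to $\{X=x,Y=y\}$ or $\{X=x\}$ respectively.

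With feasibility of $(a_c^{\star},b_c^{\star})$ in hand, the conclusion
\begin{equation*}
LB(y,x) \;\leq\; \sum_c \frac{a_c^{\star}\, P(c)}{b_c^{\star}} \;\leq\; UB(y,x)
\end{equation*}
follows immediately from the definitions of $LB$ and $UB$ as the minimum and maximum of the common objective over the feasible set, yielding \eqref{eq:causalboundsop} for every $x\in[K]$ and $y\in Y$.

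The main obstacle is conceptual rather than computational: the constraints in \eqref{eq:optimization} are only \emph{necessary} conditions satisfied by the true joint probabilities, so the proof delivers validity but not tightness of the bounds. The arguments themselves collapse to a checklist of elementary probability facts (monotonicity, marginalization, Fr\'echet--Hoeffding), together with an explicit appeal to the back-door identity \eqref{eq:complexbound} already established via \cite{pearl2009causality}. The subtlety worth flagging in the write-up is that the marginal-coupling equalities $\sum_c a_c = P(x,y)$ and $\sum_c b_c = P(x)$ are what couple the per-$c$ variables together and prevent the ratios $a_c/b_c$ from being driven to degenerate values, which is precisely what makes the bounds non-trivial compared to \eqref{eq:simplebound}.
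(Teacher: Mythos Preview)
Your argument is correct: setting $a_c^{\star}=P(x,y,c)$ and $b_c^{\star}=P(x,c)$, checking feasibility via monotonicity, marginalization, and Fr\'echet--Hoeffding, and then invoking the back-door identity \eqref{eq:complexbound} is exactly the right route, and your observation that this establishes validity but not tightness is also accurate. The paper itself does not give a proof of Theorem~\ref{thm:1}; it simply states that the result is a simple modification of Theorem~4 in \cite{li2021bounds} and omits the argument, so your write-up in fact supplies more detail than the paper does, and the feasibility-of-the-truth argument you outline is the standard one underlying that reference.
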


Note that the optimization problem is well defined as long as $b_c$ is positive for all $c\in C$. Indeed, it is a linear-fractional optimization problem and can be rewritten as a linear programming problem; see Chapter 4.3 in \cite{boyd2004convex} for details. Theorem \ref{thm:1} is a simple modification of Theorem 4 in \cite{li2021bounds}, therefore, its proof is omitted.
Theorem \ref{thm:1} provides a way to bound the learner's reward probabilities using the expert's data without introducing any biases caused by the UCs. Recall that the learner's goal is to find the best risk-averse arm that minimizes the CVaR regret defined in \eqref{eq:cvarregret}. Using the bounds on $P(Y|do(X))$ developed in \eqref{eq:optimization}, we can calculate the bounds on $\text{CVaR}_{\alpha}(Y|do(X))$. 
\begin{theorem}\label{theorem2}
Assume $Y\in\{y_0,\ldots, y_n\}$ and $y_0 < \ldots < y_n$. Let $a_i,b_i$ the bounds for $P(y_i|do(x))$ obtained by \eqref{eq:optimization} for action $x$, i.e., $a_i\leq P(y_i|do(x))\leq b_i$. Then, for a given risk level $\alpha$, the causal bounds on the CVaR can be obtained by the solution of the optimization problem
\vspace{-6mm}
\begin{align}\label{eq:cvaropgeneral}
    & \text{CVaR}_{\alpha}(Y|do(x))_{\min(\max)} = \min(\max) \quad m \\
    & \text{s.t.} \quad a_i\leq P(y_i|do(x)) \leq b_i \quad \forall i,\nonumber \\
    & \quad \quad \sum_{i} P(y_i|do(x))=1,\text{ where} \nonumber\\
   &  m = \left \{
  \begin{aligned} \label{eq:condtionalconstraint}
    &y_0, \text{ if }\ p(y_0|do(x))\geq\alpha; \\
    & \ldots \\
    & f(y_0,\ldots,y_k,y_{k+1}), \text{ if}\ \sum_{i=0}^{k}P(y_i|do(x))<\alpha,\\
    & \text{and }\sum_{i=0}^{k+1}P(y_i|do(x))\geq\alpha; \\
    & \ldots 
  \end{aligned}, \right. \\
  & \text{and}\ f(y_0,\ldots,y_k,y_{k+1}) = 1/\alpha \cdot \sum_{i=1}^k y_iP(y_i|do(x))\nonumber\\
  &+1/\alpha \cdot y_{k+1}(\alpha-\sum_{i=0}^{k}P(y_i|do(x))).\nonumber
\end{align}
\end{theorem}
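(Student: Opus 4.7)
The plan is two-fold: first, derive a closed-form piecewise-linear expression for the CVaR of an arbitrary discrete distribution as a function of its probability mass function, and second, observe that Theorem~\ref{thm:1} constrains the interventional law $P(Y|do(x))$ to a polytope over which this expression can be extremized to give the desired bounds.

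For the first part, I would start from the Rockafellar--Uryasev representation $\text{CVaR}_\alpha(Y) = \sup_\nu\{\nu - \alpha^{-1}\mathbb{E}[(\nu-Y)^+]\}$ used in the preliminaries and specialize it to a discrete $Y \in \{y_0 < \ldots < y_n\}$ with masses $p_i = P(Y=y_i)$. The function $g(\nu) = \nu - \alpha^{-1}\sum_i (\nu - y_i)^+ p_i$ is concave and piecewise linear in $\nu$ with breakpoints exactly at the support values $y_i$, and its slope on each interval $(y_k, y_{k+1})$ equals $1 - \alpha^{-1}\sum_{i=0}^k p_i$. Hence $g$ is non-decreasing as long as the cumulative mass stays below $\alpha$ and non-increasing once it meets or exceeds $\alpha$, so the supremum is attained at $\nu^\star = y_{k+1}$ for the smallest $k+1$ with $\sum_{i=0}^{k+1} p_i \geq \alpha$ (and at $\nu^\star = y_0$ if $p_0 \geq \alpha$ already). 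Plugging $\nu^\star$ back into $g$ and simplifying gives exactly the piecewise expression for $m$ in \eqref{eq:condtionalconstraint}, namely $m = \alpha^{-1}[\sum_{i=0}^k y_i p_i + y_{k+1}(\alpha - \sum_{i=0}^k p_i)]$ on the corresponding regime.

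For the second part, Theorem~\ref{thm:1} supplies per-outcome bounds $a_i \leq P(y_i|do(x)) \leq b_i$, and the probabilities of the disjoint events $\{Y=y_i\}$ under $do(x)$ sum to one. Any candidate interventional law therefore lies in the polytope $\mathcal{P}(x) = \{p \in \mathbb{R}^{n+1}: a_i \leq p_i \leq b_i,\ \sum_i p_i = 1\}$. Because $\text{CVaR}_\alpha$ is a deterministic function of the mass vector (via the piecewise formula from the previous paragraph), valid lower and upper bounds on $\text{CVaR}_\alpha(Y|do(x))$ are obtained by minimizing and maximizing $m$ over $\mathcal{P}(x)$. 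Substituting the piecewise formula for $m$ into this optimization is precisely problem \eqref{eq:cvaropgeneral}, which completes the argument.

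The main technical obstacle is the derivation in the first part: identifying the maximizer of $g(\nu)$ and verifying that the resulting closed form agrees with the regime-dependent expression in \eqref{eq:condtionalconstraint}. A secondary subtlety is that the objective $m$ is not a single expression but depends on which cumulative-probability regime the decision variables $(p_i)$ fall into; operationally this means the optimization \eqref{eq:cvaropgeneral} decomposes into at most $n+1$ linear programs, one per choice of breakpoint $k$, whose extrema are aggregated to form the final bound. The tightness of the bound with respect to the unknown structural model is inherited from Theorem~\ref{thm:1} and requires no separate argument.
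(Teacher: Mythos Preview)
Your proposal is correct and follows the same route as the paper: the paper's proof consists solely of the sentence ``The result follows from CVaR calculations provided in \cite{rockafellar2000optimization,rockafellar2002conditional},'' and your first part is precisely a worked-out instance of those calculations (locating the maximizer of the Rockafellar--Uryasev objective at the $\alpha$-quantile breakpoint and reading off the piecewise-linear formula), while your second part makes explicit the immediate step of optimizing that formula over the box-plus-simplex constraints furnished by Theorem~\ref{thm:1}. In other words, you have expanded what the paper leaves implicit, but there is no methodological difference.
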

\vspace{-8mm}
\begin{proof}
The result follows from CVaR calculations provided in \cite{rockafellar2000optimization,rockafellar2002conditional}. 
\end{proof}
\vspace{-10mm}
Compared to CVaR calculations as in \cite{rockafellar2000optimization,rockafellar2002conditional},  $P(y_i|do(X))$ is not a fixed number here; thus, we formulate a constrained optimization problem to calculate the CVaR values. The conditional constraints in \eqref{eq:condtionalconstraint} can be reformulated using binary variables to indicate whether the conditions hold or not; see e.g., \cite{bradley1977applied} Chapter 9. Thus, the optimization in Theorem \ref{theorem2} is indeed a mixed-integer program problem. Next, we present the exact form of this problem when Y is a binary variable.
\begin{corollary}
Let $Y$ be binary and denote $Y=1$ and $Y=0$ by $y$ and $\Bar{y}$. With the same assumptions as in Theorem \ref{theorem2}, the causal bounds on the CVaR can be obtained by the solution of the mixed-integer program problem
\vspace{-10mm}
\begin{align}
  \text{CVaR}_{\alpha}&(Y|do(x))_{\min(\max)} = \min(\max) \quad  m-n \label{eq:cvarop}\\
    \text{s.t.} \quad  & P(\bar{y}|do(x))\leq \alpha +M(1-m),  \nonumber\\
    &-P(\bar{y}|do(x))\leq -\alpha +Mm, \nonumber\\ 
    & a_0 \leq P(\bar{y}|do(x)) \leq b_0, \; a_1 \leq P(y|do(x)) \leq b_1 ,\nonumber\\
    & n \leq Mm, n \leq P(\bar{y}|do(x))/\alpha, \nonumber\\
    &n \geq P(\bar{y}|do(x))/\alpha-M(1-m) , \nonumber \\
    & P(y|do(x)) + P(\bar{y}|do(x)) = 1, \nonumber\\
    & n \geq 0,  m\in \{0,1\}, \nonumber \\
    & \text{where $M$ is a constant large number.}\nonumber 
\end{align}
 \end{corollary}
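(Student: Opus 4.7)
The plan is to reduce the corollary to a case-by-case verification that the mixed-integer linear constraints in \eqref{eq:cvarop} correctly encode the piecewise $\text{CVaR}_\alpha$ formula obtained by specializing Theorem \ref{theorem2} to binary $Y$. First, I would set $y_0=\bar y=0$, $y_1=y=1$ and write $p:=P(\bar y\mid do(x))$. The piecewise definition \eqref{eq:condtionalconstraint} then collapses to two branches: $m=y_0=0$ when $p\geq\alpha$, and $m=f(y_0,y_1)=\tfrac{1}{\alpha}\,y_1(\alpha-p)=1-p/\alpha$ when $p<\alpha$. Since both branches agree at $p=\alpha$, the target value is a continuous piecewise-linear function $g(p)$ equal to $0$ on $[\alpha,1]$ and to $1-p/\alpha$ on $[0,\alpha)$.

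Next, I would inspect the big-$M$ constraints in \eqref{eq:cvarop} one at a time to show that the auxiliary variables reproduce $g(p)$ exactly. The two inequalities $p\leq\alpha+M(1-m)$ and $-p\leq -\alpha+Mm$ together act as an indicator for the event $\{p\leq\alpha\}$: for $p<\alpha$ strictly, only $m=1$ is feasible; for $p>\alpha$ strictly, only $m=0$ is feasible; at $p=\alpha$ either value is admissible. The four inequalities on $n$, combined with $n\geq 0$, form a standard linearization of the bilinear product $n=m\cdot(p/\alpha)$: when $m=0$, the bounds $n\leq Mm$ and $n\geq 0$ collapse $n$ to $0$; when $m=1$, the bounds $n\leq p/\alpha$ and $n\geq p/\alpha-M(1-m)$ collapse $n$ to $p/\alpha$. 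Substituting into the objective gives $m-n=0$ when $m=0$ and $m-n=1-p/\alpha$ when $m=1$, which matches $g(p)$ on both branches. At $p=\alpha$ both feasible assignments yield $m-n=0$, so the ambiguity is harmless.

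Finally, I would conclude by noting that for every feasible $p$ satisfying $a_0\leq p\leq b_0$, $a_1\leq 1-p\leq b_1$ and $P(y\mid do(x))+P(\bar y\mid do(x))=1$, the auxiliary variables $(m,n)$ can and must be chosen so that $m-n$ equals $g(p)=\text{CVaR}_\alpha(Y\mid do(x))$ evaluated at $p$. Hence minimizing (respectively maximizing) $m-n$ over the full feasible set of \eqref{eq:cvarop} is equivalent to minimizing (respectively maximizing) the specialized CVaR formula of Theorem \ref{theorem2} over the causal-bounded probability simplex, which is precisely the claim of the corollary.

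The main technical point, rather than a genuine obstacle, is choosing $M$ large enough that the indicator and the bilinear linearization are both tight; since $p\in[0,1]$ and $p/\alpha\in[0,1/\alpha]$, any $M\geq\max\{1,1/\alpha\}$ suffices. The only subtlety worth flagging is the boundary $p=\alpha$, where the constraints on $(m,n)$ are not single-valued, but as checked above both admissible choices give the same objective value, so the MIP remains well-posed and its optimal value coincides with that of the optimization in Theorem \ref{theorem2}.
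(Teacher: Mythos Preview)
Your proposal is correct and follows essentially the same approach as the paper's own proof: a case analysis on $m\in\{0,1\}$ verifying that the big-$M$ constraints force $m-n$ to coincide with the binary specialization of the piecewise CVaR formula from Theorem~\ref{theorem2}. Your treatment is in fact slightly more careful than the paper's---you explicitly handle the boundary $p=\alpha$, articulate that the four inequalities on $n$ are a standard linearization of $n=m\cdot p/\alpha$, and give a concrete sufficient choice of $M$---but the underlying argument is identical.
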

\begin{proof}
By Theorem \ref{theorem2}, we know that if the condition $P(\bar{y}|do(x))\geq \alpha$ holds, then the objective function $\text{CVaR}_{\alpha}(Y|do(x))$ is 0; if the condition $P(\bar{y}|do(x))< \alpha$ and $P(\bar{y}|do(x))+P(y|do(x))=1\geq \alpha$ holds, it implies the objective function equals to $(\alpha-P(\bar{y}|do(x)))/\alpha$. We only need to check whether the integer program defined in \eqref{eq:cvarop} has the same objective function as in Theorem \ref{theorem2}. When $m=0$, we have that $-P(\bar{y}|do(x))\leq -\alpha$ and $n=0$, which implies that $m-n=0$; when $m=1$, we have that $-P(\bar{y}|do(x))\leq -\alpha$, $n= P(\bar{y}|do(x))$, which implies that $m-n=1- P(\bar{y}|do(x))/\alpha.$ As a result, the mixed-integer program in \ref{eq:cvarop} is equivalent to the problem in Theorem \ref{theorem2} for the binary outcome case.
 \end{proof}

%
\subsection{Causal Bound Constrained MAB}
\begin{algorithm}[t]
	\begin{small}
		\caption{Causal bound constrained CVaR-UCB}
		\label{alg:cvarucb}
		\begin{algorithmic}[1]
			\REQUIRE{ Risk level $\alpha$, reward range $U$, horizon $n$, CVaR causal lower bound $l_x$ and upper bound $h_x$ for all arms.}
			\STATE{	Remove any arm $x$ with $h_{x}<l_{max}$, where $l_{max} = \max_x \{l_x\}$. Let $[K']$ be the set of remaining arms.}
			\STATE{Choose each arm $x\in [K']$ once.}
			\STATE{Set $\hat{F}_x$ as the empirical CDFs of each arm $x\in [K']$ on $[0,U]$,}
			\STATE{Set $T_x \leftarrow 1.$}
			\FOR{$t=1,\ldots,n$}
			\FOR{each $x\in [K']$}
			\STATE{$\epsilon_x\leftarrow \sqrt{\frac{\ln(2n^2)}{2T_x}}$.}
			\STATE{$\Tilde{F}_x(y)\leftarrow \left(\hat{F}_x(y)-\epsilon_x \mathbb{I}\{y\in[0,U)\}\right)^+$.}
			\STATE{$\text{UCB}_x^{\text{DKWClip}}(t)\leftarrow \tilde{c}_x^{\alpha} := \min\{\text{CVaR}_\alpha(\Tilde{F}_x),h_x\}$.}
			\ENDFOR
			\STATE{Select action $X_t= \argmax_x{\text{UCB}_x^{\text{DKWClip}}}(t)$.}
			\STATE{$T_{X_t}\leftarrow T_{X_t} +1$.}
			\STATE{Update the empirical CDF $\hat{F}_{X_t}$ of arm $X_t$.}
			\ENDFOR
    
		\end{algorithmic}
	\end{small}
	\vspace{0mm}
\end{algorithm}
Using the above causal bounds on CVaR, we propose a causal bound constrained CVaR-UCB algorithm outlined in Algorithm \ref{alg:cvarucb}. Specifically, let $l_x$ and $h_x$ be the lower and upper causal bounds on $\text{CVaR}_{\alpha}(Y|do(x))$ for each action $x\in [K]$. Denote by ${\max}$ the maximum value of all lower bounds, i.e., $l_{\max}=\max_{x\in [K]}l_x$. Similar to CVaR in \cite{tamkin2019distributionally}, we compute the CVaR-UCB for each arm at the beginning of each time step and select the arm with the highest CVaR-UCB. Since the causal bounds on CVaR provide upper bounds on the CVaR values, we can use these causal bounds to clip the CVaR-UCB, denoted as $\text{UCB}_x^{\text{DKWClip}}$, i.e., we take the minimum between CVaR-UCB and $h_x$ for each arm as in step 9 in Algorithm \ref{alg:cvarucb}. These causal constraints can reduce the variance of the CVaR-UCB estimates, thus, they can help avoid pulling sub-optimal arms that have higher CVaR-UCB values. In addition, any arms with $h_x < l_{\max}$ should not be pulled by the learner since the $l_{\max}$ arm is strictly better than them; see in step 1 in Algorithm \ref{alg:cvarucb}. In Algorithm \ref{alg:cvarucb}, we denote by $\hat{F}_x$ the empirical CDF estimate for arm $x$.
%

%

\section{Regret Analysis}\label{sec:3_regret}
In this section, we provide a regret analysis for Algorithm \ref{alg:cvarucb} and show that the proposed algorithm can achieve zero or constant regret under certain conditions. 
\begin{lemma}[Regret Decomposition] \label{lemma}The CVaR regret satisfies the following identity
\begin{align}\label{eq:cvarregret_decompo}
    R_n^{\alpha} = \sum_{x=1}^K\Delta_x^{\alpha}\mathbb{E}[T_x(n)],
\end{align}
where $\Delta_x^{\alpha}=\max_i \text{CVaR}_\alpha(F_i)-\text{CVaR}_\alpha(F_x)$ is the sub-optimality gap of arm x with respect to the optimal CVaR arm and $T_x(n)$ is the number of times arm $x$ has been pulled up to time step $n$.
\end{lemma}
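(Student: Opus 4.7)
The plan is to invoke the standard MAB regret-decomposition identity, adapted in a straightforward way to the CVaR objective. Let me write $c^\star := \max_{x \in [K]} \text{CVaR}_\alpha(F_x)$ for brevity. The proof rests on two bookkeeping identities: (i) since the learner plays exactly one arm per round, the horizon decomposes as $n = \sum_{x=1}^K T_x(n)$ deterministically; and (ii) by inserting indicators $\mathbb{I}\{X_t = x\}$ and swapping the order of summation, the cumulative CVaR of the chosen arms can be rewritten as $\sum_{t=1}^n \text{CVaR}_\alpha(F_{X_t}) = \sum_{x=1}^K T_x(n)\, \text{CVaR}_\alpha(F_x)$.

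Next, I would substitute both identities into the definition $R_n^\alpha = n c^\star - \mathbb{E}\bigl[\sum_{t=1}^n \text{CVaR}_\alpha(F_{X_t})\bigr]$. Writing $n c^\star = \mathbb{E}\bigl[\sum_{x=1}^K T_x(n)\, c^\star\bigr]$ using identity (i), combining the two terms under a single expectation, and using linearity of expectation on the finite sum over arms yields $R_n^\alpha = \sum_{x=1}^K \bigl(c^\star - \text{CVaR}_\alpha(F_x)\bigr)\,\mathbb{E}[T_x(n)]$. Recognizing the factor in parentheses as $\Delta_x^\alpha$ completes the derivation of \eqref{eq:cvarregret_decompo}.

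The argument is purely algebraic, and I do not anticipate any real obstacle. The only mild subtlety is the interchange of summation and expectation, which is immediate since $K$ is finite. Note also that the identity holds for any arm-selection policy; in particular, it does not rely on the specific structure of Algorithm \ref{alg:cvarucb} or on the causal bounds, which is convenient because subsequent regret bounds can then be derived by separately controlling $\mathbb{E}[T_x(n)]$ for each sub-optimal arm $x$.
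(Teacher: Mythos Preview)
Your proposal is correct and is precisely the standard regret-decomposition argument the paper invokes: the paper's proof consists of a one-line appeal to Lemma~4.5 in \cite{lattimore2020bandit} with the mean gap replaced by the CVaR gap, and you have simply unpacked that lemma's bookkeeping (the identities $n=\sum_x T_x(n)$ and $\sum_t \text{CVaR}_\alpha(F_{X_t})=\sum_x T_x(n)\,\text{CVaR}_\alpha(F_x)$, followed by linearity of expectation). There is nothing to add or correct.
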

\begin{proof}
The proof follows from Lemma 4.5 in \cite{lattimore2020bandit} by replacing the mean sub-optimality gap with CVaR sub-optimality gap.
\end{proof}
The following result shows that the expected number of times that sub-optimal arms are selected by Algorithm \ref{alg:cvarucb} is no greater than that provided by the CVaR-UCB algorithm in \cite{tamkin2019distributionally}.
\begin{theorem}\label{theorem3}
Let $\mu^*=\max_x \text{CVaR}_\alpha(F_x)$. Then, the expected number of times $\mathbb{E}[T_x(n)]$ that any sub-optimal arm $x$ is pulled by Algorithm \ref{alg:cvarucb} is upper bounded by:
\[\mathbb{E}[T_x(n)] \leq \begin{cases} 
      0 & h_x < l_{\max} \\
      1 & l_{\max} \leq h_x < \mu^* \\
      3+\frac{4\ln(\sqrt{2}n)U^2}{\alpha^2\Delta^{\alpha 2}_x} & h_x\geq \mu^*
   \end{cases}.
\]
\end{theorem}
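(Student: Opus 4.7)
The plan is to split along the three regimes of the bound according to how the causal upper bound $h_x$ compares with $l_{\max}$ and with $\mu^*$, and then to control the third (non-trivial) regime by the standard UCB-style argument adapted to CVaR.

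For the first case ($h_x<l_{\max}$) the bound is immediate: Step~1 of Algorithm~\ref{alg:cvarucb} removes arm $x$ from the surviving action set $[K']$, so $T_x(n)=0$ deterministically. For the second case ($l_{\max}\leq h_x<\mu^*$), arm $x$ survives the pruning and is played exactly once by the initialization in Step~2, which accounts for the ``$1$''. In any subsequent round the clipping in Step~9 forces $\text{UCB}_x^{\text{DKWClip}}(t)\leq h_x<\mu^*$, while for the optimal arm $x^*$ Theorem~\ref{theorem2} gives $h_{x^*}\geq\mu^*$, and on the DKW good event $\tilde F_{x^*}\leq F_{x^*}$ the monotonicity of CVaR under stochastic dominance yields $\text{CVaR}_\alpha(\tilde F_{x^*})\geq\mu^*$, so $\text{UCB}_{x^*}^{\text{DKWClip}}(t)\geq\mu^*>\text{UCB}_x^{\text{DKWClip}}(t)$ and $x$ cannot be chosen. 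The complementary DKW mass is $\leq 1/n^2$ per round by the choice $\epsilon=\sqrt{\ln(2n^2)/(2T)}$, and its contribution over $t\leq n$ is absorbed into the stated bound.

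For the third case ($h_x\geq\mu^*$) the causal clip cannot rule out $x$, so I would follow the CVaR-UCB template of \cite{tamkin2019distributionally}. The starting point is the decomposition
\begin{equation*}
T_x(n)\leq 1+u+\sum_{t=1}^n\mathbb{I}\{X_t=x,\;T_x(t-1)\geq u\},
\end{equation*}
where the ``$1$'' is the initialization and $u$ absorbs the rounds with $T_x(t-1)<u$. The event inside the sum is contained in the union of DKW-failure events for $x$ and $x^*$: on the intersection of the two good events one has $F_x-2\epsilon_x\leq\tilde F_x\leq F_x$ and similarly for $x^*$, so the $(U/\alpha)$-Lipschitz transfer from $L^\infty$ CDF deviations to CVaR gives $\text{CVaR}_\alpha(\tilde F_x)\leq\text{CVaR}_\alpha(F_x)+2U\epsilon_x/\alpha$ and $\text{CVaR}_\alpha(\tilde F_{x^*})\geq\mu^*$; choosing $u=4\ln(\sqrt{2}n)U^2/(\alpha^2\Delta_x^{\alpha 2})$ makes $2U\epsilon_x/\alpha\leq\Delta_x^\alpha$ whenever $T_x(t-1)\geq u$, so $\text{UCB}_x^{\text{DKWClip}}(t)\leq\mu^*\leq\text{UCB}_{x^*}^{\text{DKWClip}}(t)$ and $x$ is not selected. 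Each DKW failure has probability $\leq 1/n^2$ per round, and summing the two of them across the horizon contributes at most $2/n\leq 2$ in expectation; together with the initialization this yields the constant ``$3$''.

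The main obstacle is the technical inequality shared by Cases~2 and~3: the $(U/\alpha)$-Lipschitz continuity $|\text{CVaR}_\alpha(\tilde F)-\text{CVaR}_\alpha(F)|\leq(U/\alpha)\|\tilde F-F\|_\infty$ for $[0,U]$-valued random variables, which converts the $L^\infty$ DKW radius into a CVaR-scale gap and is the source of the $U^2/\alpha^2$ factor in the final bound. The remaining bookkeeping is verifying that the two good events together pin the UCBs to their required positions relative to $\mu^*$, and tracking the constants so that the aggregated DKW failure mass remains $O(1)$ when summed over the horizon.
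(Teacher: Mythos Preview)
Your proposal is essentially correct and follows the same three-case split and DKW/CVaR-Lipschitz machinery as the paper, which in turn defers Case~3 entirely to Theorem~A.6 of \cite{tamkin2019distributionally}. One accounting point in Case~2 differs: you attribute the ``$1$'' to the initialization pull in Step~2, but the regret in \eqref{eq:cvarregret} and hence $T_x(n)$ in Lemma~\ref{lemma} tally only the actions $X_t$ for $t=1,\ldots,n$ in the main loop, so the initialization does not enter. In the paper the ``$1$'' instead comes from the bad-event bound $P\bigl(c_1^\alpha\geq\min_{t\in[n]}\tilde c_1^\alpha(t)\bigr)\leq 1/n$ multiplied by the trivial estimate $T_x(n)\leq n$, while on the good event the clip $\tilde c_x^\alpha(t)\leq h_x<\mu^*=c_1^\alpha$ ensures arm~$x$ is never selected, giving $\mathbb{E}[\mathbb{I}\{G_x\}T_x(n)]=0$. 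With your bookkeeping you would end up with $1+O(1/n)$ rather than exactly~$1$; the paper's good-event decomposition is the cleaner route here. For Case~3 your counting decomposition $T_x(n)\leq 1+u+\sum_t\mathbb{I}\{\cdot\}$ is a standard equivalent of the paper's good-event indicator split and leads to the same bound via the same $(U/\alpha)$-Lipschitz transfer you identify.
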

\begin{proof}
Denote the condition $h_x < l_{\max}$ as $C_1$, $l_{\max} \leq h_x < \mu^*$ as $C_2$ and $h_x\geq \mu^*$ as $C_3$. Note that Algorithm \ref{alg:cvarucb} does not pull any arms that satisfy condition $C_1$ since they are guaranteed to be sub-optimal. Therefore, the expected number of pulls of these arms is 0. Without loss of generality, we assume that the first arm is optimal, i.e., $\mu^* =\text{CVaR}_\alpha(F_1).$
Let $c_x^{\alpha}$ be the CVaR of arm $x$ and $\hat{F}_{x,t}$ denote the empirical CDF of arm $x$ before time step $t$. Then, $\tilde{c}_x^{\alpha}(t) = \min\{ \text{CVaR}_{\alpha}(\tilde{F}_{x,t}), h_x\}$ according to Algorithm \ref{alg:cvarucb}.
We define the ``good event" $G_x$ as in \cite{tamkin2019distributionally}:
\begin{align*}
    G_x = \{c_1^{\alpha} < \min_{t\in[n]} \Tilde{c}_1^{\alpha}(t)\} \cap \{\Tilde{c}_x^{\alpha}(u_x)< c_1^{\alpha}\},
\end{align*}
where $u_x\in[n]$ is a constant to be chosen later. The ``good event" captures the case when the optimal arm is never underestimated. We expect that when the event $G_x$ occurs, sub-optimal arms will not be pulled frequently and the complement event $G_x^c$ occurs with low probability. We can rewrite $\mathbb{E}[T_x(n)]$ by conditioning on the event $G_x$, as 
\begin{align}\label{eq:tndecompo}
    \mathbb{E}[T_x(n)]=\mathbb{E}[\mathbb{I}\{G_x\}T_x(n)]+\mathbb{E}[\mathbb{I}\{G_x^c\}T_x(n)].
\end{align}
We first consider the case when $G_x$ occurs. a) Suppose condition $C_2$ holds. If $c_1^{\alpha}< \min_{t\in[n]} \Tilde{c}_1^{\alpha}(t)$, i.e., if the optimal arm is never underestimated, then the arm $x$ will not be pulled since $\Tilde{c}_x^{\alpha}(t)\leq h_x<c_1^{\alpha}$, where $\Tilde{c}_x^{\alpha}(t)\leq h_x$ holds according to the definition of $\Tilde{c}_x^{\alpha}(t).$ Thus, when the event $G_x$ occurs, arm $x$ will not be pulled and $\mathbb{E}[\mathbb{I}\{G_x\}T_x(n)]=0$. b) Suppose condition $C_3$ holds. One can prove by contradiction that $T_x(n)\leq u_x$, which follows from the proof of Theorem A.6 in \cite{tamkin2019distributionally}. %

We now consider the case when $G_x^c$ occurs, where  $G_x^c = \{c_1^{\alpha} \geq \min_{t\in[n]} \Tilde{c}_1^{\alpha}(t)\} \cup \{\Tilde{c}_x^{\alpha}(u_x)\geq c_1^{\alpha}\}$. We need to show that $G_x^c$ occurs with low probability. a) Suppose condition $C_2$ holds. We have $P(\{\Tilde{c}_x^{\alpha}(u_x)\geq c_1^{\alpha}\})=0$ due to the fact $\Tilde{c}_x^{\alpha}(t)\leq h_x$ for all $t\in[n]$. According to Theorem A.6 in \cite{tamkin2019distributionally}, we have $P(\{c_1^{\alpha} \geq \min_{t\in[n]} \Tilde{c}_1^{\alpha}(t)\})\leq \frac{1}{n}$. Since $T_x(n)\leq n$, we obtain that $\mathbb{E}[\mathbb{I}\{G_x^c\}T_x(n)]\leq 0+\frac{1}{n}n=1.$ b) 
Suppose condition $C_3$ holds. The value of $\mathbb{E}[\mathbb{I}\{G_x^c\}T_x(n)]$ reduces to that in \cite{tamkin2019distributionally} without causal bounds since the condition $C_3$ does not provide additional information on the values of $\tilde{c}_x^{\alpha}$ to decrease the probability of the event $G_x^c$. Thus, we have  $\mathbb{E}[\mathbb{I}\{G_x^c\}T_x(n)]\leq \frac{n+1}{n}$ as shown in Theorem A.6 in \cite{tamkin2019distributionally}.

In summary, under conditions $C_1$ and $C_2$, the expected number of times that Algorithm \ref{alg:cvarucb} pulls a sub-optimal arm is upper bounded by 0 and 1, respectively. Under condition $C_3$, causal bounds do not provide additional information to reduce the expected number of pulls of sub-optimal arms. In this case, the expected number of pulls matches the upper bounds in \cite{tamkin2019distributionally}. The proof is complete. 
\end{proof}
%
%

Theorem \ref{theorem3} provides conditions under which causal bounds can help decrease the number of pulls of sub-optimal arms. By multiplying the expected number of pulls of sub-optimal arms with their corresponding sub-optimality gaps in \eqref{eq:cvarregret_decompo}, it is straightforward to show that Algorithm \ref{alg:cvarucb} achieves lower regret compared to \cite{tamkin2019distributionally}. This is because if there exists an arm that satisfies condition $C_1$ or $C_2$, then the CVaR regret of Algorithm \ref{alg:cvarucb} is lower than that of \cite{tamkin2019distributionally}.

\section{Numerical Experiments}\label{sec:4_results}
\begin{figure}[t]
	\centering
	\subfigure[\footnotesize Cumulative CVaR-regret of our method (green) and CVaR-UCB \cite{tamkin2019distributionally} (blue). The solid lines and shades are averages and standard deviations over 15 runs.]{\label{subfig:cvar_regret}\includegraphics[scale=0.5]{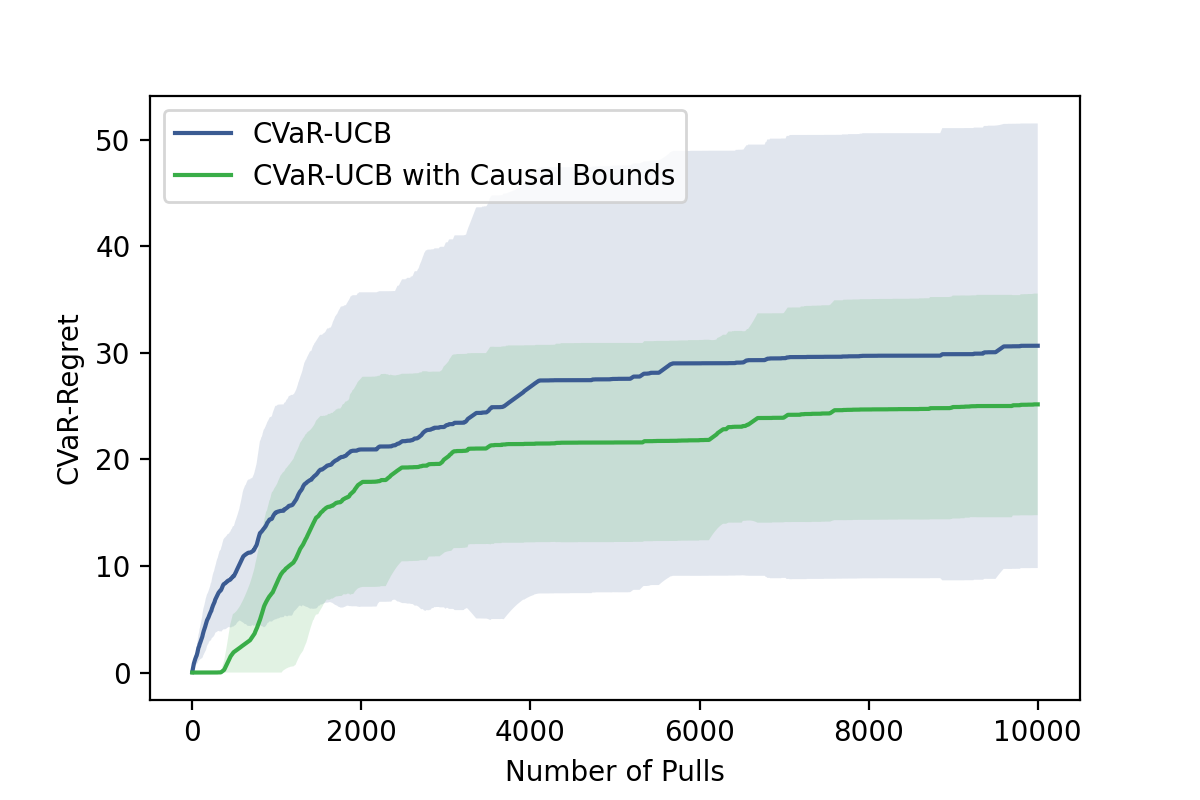}} \quad
	\subfigure[\footnotesize Cumulative regret of our method (green) and standard UCB \cite{auer2002finite} (red). The solid lines and shades are averages and standard deviations over 15 runs.]{\label{subfig:mean_regret}\includegraphics[scale=0.5]{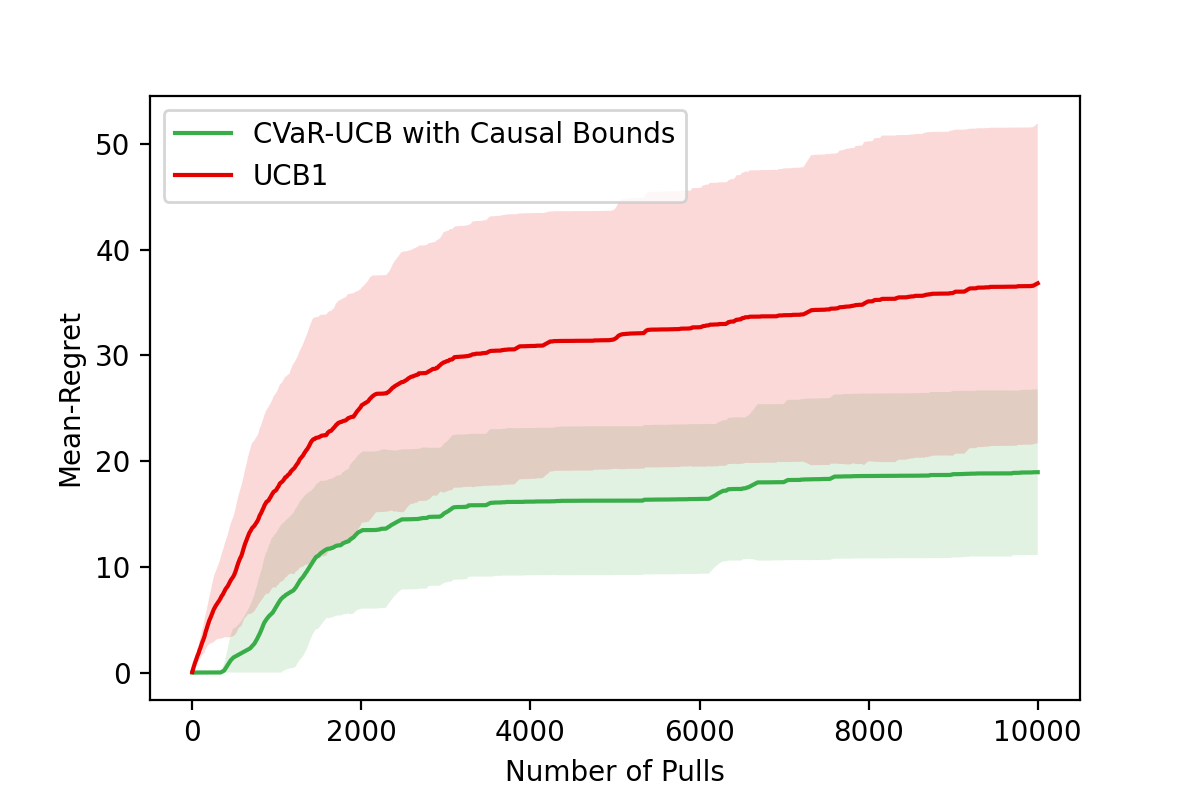}}
	\caption{\small Cumulative regret comparisons}
	\label{figure:results}
\end{figure}

\begin{table}[t]
\caption{Probability tables of generating the observational data by the expert and CVaR causal bounds. (\textit{a}) the joint distribution of $P(x,c)$; (\textit{b}) the probabilities of being effective for every contexts and strategies; (\textit{c}) causal bounds on $\text{CVaR}(Y|do(X))$ to the learner.
\label{tab:probability table}}
\centering
\subtable[$P(X,C)$]{
\begin{tabular}{|c |c|c| }
 \hline
 ~    & $C=1$ & $C=0$ \\ 
 \hline
 $X=1$    & 0.2 & 0.7 \\ 
 \hline
 $X=0$ & 0.8 & 0.3 \\
 \hline
\end{tabular}} 
\subtable[$P(Y=1|X,C$)]{
\begin{tabular}{|c |c|c| } 
 \hline
 ~    & $C=1$ & $C=0$ \\ 
 \hline
 $X=1$    & 0.1 & 0.55\\ 
 \hline
 $X=0$ & 0.3 & 0.45\\
 \hline
\end{tabular}}
\subtable[$\text{CVaR}(Y|do(X))$]{
\begin{tabular}{|c|c|c| } 
 \hline
 ~    & $\alpha=0.75$ & $\text{CVaR}$ \\ 
 \hline
 $X=0$  &[0,0.4] & 0.243\\ 
 \hline
 $X=1$   & [0.29,0.45] & 0.328\\
 \hline
\end{tabular}}
\vspace{-6mm}
\end{table}

Consider an emotion regulation (ER) intervention design problem for people with high Social Interaction Anxiety Scale (SIAS) \cite{mattick1998development}, who are experiencing moderate to severe social anxiety symptoms and are seeking for rapid and adaptive personalized ER intervention to relieve stress and anxiety. Specifically, we choose Seeking advice/comfort from others (S1) and Accepting thoughts/feelings (S2) as two strategies to help manage people's emotion as in \cite{ameko2020offline}. Note that the first strategy S1 is behavioral while S2 is cognitive since it involves a change in one's thinking. \cite{ameko2020offline} concludes that a user's current state of movement, (e.g., being stationary versus moving) can help to determine which ER strategies would regulate his/her emotions most effectively. However, in some mobile health devices there is no activity detection function due to limited sensors; further, people may not carry the devices all the time or intentionally disable the movement detection due to privacy concerns or battery life. As a result, a person's movement information is an unobserved confounder under these circumstances. Nevertheless, data collected from devices that can detect movement can help those devices without such function using the method as proposed Section in \ref{sec:2_method}. We generate a synthetic data to demonstrate this example as follows: we use $C=1$ to indicate that the person is moving and set $P(C=1)=0.12$ to generate the contexts. 
We assume a binary variable $X$ capturing whether S1 is recommended, i.e., $X=1$ if S1 is selected and $X=0$ if S2 is selected; and a binary variable $Y$ capturing whether the person's self-reporting evaluations on the selected ER intervention suggestion is effective or not.. As we assume higher reward is better, we set $Y=1$ if the ER strategy is effective. As indicated by \cite{ameko2020offline}, Seeking advice/comfort from others is more effective for people that are stationary than moving. Thus, when $C=0$, the strategy S1 is selected more often in the expert policy.   
%
The overall context-dependent policy is summarized in Table \ref{tab:probability table}(a). The outcomes of the recommendation being effective ($Y=1$) are generated according to Table \ref{tab:probability table}(b). The observational data containing recommendations (the mobile health app suggestion, S1 or S2) and outcomes (user report of effectiveness) but excluding contextual information (movement status) is then transferred to the learner (the mobile health recommendation system). 
We first apply Theorem \ref{thm:1} to calculate causal bounds on $P(Y|do(X))$.
Then, using Theorem \ref{theorem2}, we can obtain the CVaR causal bounds for a given level of risk $\alpha$; Table \ref{tab:probability table} (c) shows the CVaR causal bounds for $\alpha=0.75$ and the true CVaR value for $\alpha=0.75$. We use Gurobi \cite{gurobi} to solve all the linear and mixed-integer programming problems.
We select $\alpha=0.75$ for our numerical experiments. Specifically, we compare our causal bound constrained CVaR-UCB with CVaR-UCB \cite{tamkin2019distributionally}  using CVaR-regret as a performance measure.

The results in Figure \ref{figure:results}(a) show that the CVaR-regret of our method is lower than the one without causal bounds, e.g., mobile health users wearing the devices without movement detection benefits from the users with advanced devices by avoiding recommendations with high risk. In addition, causal bounds help to reduce the variance. 
We further compare our method with the standard UCB algorithm \cite{auer2002finite} using mean regret as a performance measure to determine whether our proposed risk-averse method can outperform risk-neutral methods using risk-neutral criterion. We observe that, in the two-arm case, our method generates a lower regret and variance compared to the UCB algorithm, as shown in Figure \ref{figure:results}(b). This is because the sub-optimality gap in \eqref{eq:cvarregret_decompo} for the CVaR criterion is larger than the gap for the mean criterion. The larger sub-optimality gap for the CVaR criterion makes the best arm identification problem easier.   
\section{Conclusion}\label{sec:5_conc}

In this work, we proposed a transfer learning method for risk-averse MAB that can handle UCs. Specifically,
 we formulated a mixed-integer linear program (MIP)  that utilizes the observational data to calculate causal bounds on CVaR values. We then transferred these CVaR causal bounds to the learner and proposed a causal bound constrained UCB algorithm to reduce the variance of online learning. We provided a regret analysis and showed that our method can achieve zero or constant regret using causal bounds under certain conditions. To illustrate our proposed method, we simulated a mobile health emotion regulation recommender system and demonstrated that interventions can be chosen more appropriately and with lower risk using our method.
\addtolength{\textheight}{-11cm}   

\bibliography{ref.bib}
\bibliographystyle{IEEEtran}

\end{document}